\newtheorem{proposition}{Proposition}
\newtheorem{definition}{Definition}
\pgfplotsset{compat=newest}
\newcommand{\D}{\ensuremath{\mathcal{D}}}
\newcommand{\Dt}{\ensuremath{\mathcal{D}_\text{test}}}
\newcommand{\Df}{\ensuremath{\D_f}}
\newcommand{\Dp}{\ensuremath{\D_p}}
\newcommand{\unlearn}{\ensuremath{\mathcal{U}}}
\newcommand{\alg}{\ensuremath{\mathcal{A}}}
\newcommand{\algvar}{\ensuremath{\mathcal{A}^\prime}}
\newcommand{\sv}{\ensuremath{\mathcal{S}}}
\newcommand{\distance}[2]{\ensuremath{D}(#1 \Vert #2)}
\newcommand{\weights}{w}
\newcommand{\pweights}[1][\weights]{\ensuremath{P}(#1)}
\newcommand{\verify}[1][\weights]{\ensuremath{\mathcal{V}}(#1)}
\newcommand{\svmmodel}{\ensuremath{w_{\text{SVM}}}}
\newcommand{\fe}{\ensuremath{w_{\text{FE}}}}
\newcommand{\lldense}{\ensuremath{w_{\text{PL}}}}
\newcommand{\totalmodel}{\ensuremath{W}}
\newcommand{\loss}{\ensuremath{\mathcal{L}}}
\newcommand{\sampleidx}{\ensuremath{\ell}}
\newcommand{\sample}[1][\sampleidx]{\ensuremath{\mathbf{x}_{#1}}}
\newcommand{\lab}[1][\sampleidx]{\ensuremath{y_{#1}}}
\newcommand{\embedding}[1][\sampleidx]{\ensuremath{\mathbf{e}_{#1}}}
\newcommand{\topk}[1][k]{\ensuremath{\D_{#1}}}
\newcommand{\kvar}{\ensuremath{k}}
\newcommand{\Dr}[1][\kvar]{\ensuremath{\D_{r}}} %
\newcommand{\nontopk}[1][\kvar]{\ensuremath{\D_{\lnot #1}}}
\newcommand{\freq}[1][\sampleidx]{\ensuremath{f_{#1}}}
\newcommand{\nsamples}{\ensuremath{m}}
\newcommand{\thres}{\ensuremath{\tau}}
\newcommand{\optthres}{\ensuremath{\tau^\star}}
\newcommand{\confidence}[1][\D]{\ensuremath{C(#1)}}
\newcommand{\tpr}{\ensuremath{\text{TPR}}}
\newcommand{\fpr}{\ensuremath{\text{FPR}}}
\newcommand{\define}{\triangleq}
\newcommand{\unlearnedfe}{unlearned FE}
\newcommand{\unlearnedsvm}{unlearned SVM}
\newcommand{\unlearnedmodel}{unlearned FE + SVM}
\newcommand{\algname}{\textsc{MaxRR}\xspace}
\begin{document}
\title{Efficient Machine Unlearning by\\ Model Splitting and Core Sample Selection}

\author{
\IEEEauthorblockN{Maximilian Egger\IEEEauthorrefmark{1}, Rawad~Bitar\IEEEauthorrefmark{1} and Rüdiger Urbanke\IEEEauthorrefmark{2}\\}
\IEEEauthorblockA{\IEEEauthorrefmark{1}%
                   Technical University of Munich, Germany \{maximilian.egger, rawad.bitar\}@tum.de}
                   
\IEEEauthorblockA{\IEEEauthorrefmark{2}%
                   École Polytechnique Fédérale de Lausanne, Switzerland \{rudiger.urbanke\}@epfl.ch}
\thanks{This project is funded by DFG (German Research Foundation) projects under Grant Agreement Nos. BI 2492/1-1 and WA 3907/7-1.}
\thanks{Part of the work was done when RB and ME visited RU at EPFL supported in parts by EuroTech Visiting Researcher Programme grants.}%
}

\maketitle
\IEEEpeerreviewmaketitle

\begin{abstract}
Machine unlearning is essential for meeting legal obligations such as the \emph{right to be forgotten}, which requires the removal of specific data from machine learning models upon request. While several approaches to unlearning have been proposed, existing solutions often struggle with efficiency and, more critically, with the verification of unlearning—particularly in the case of weak unlearning guarantees, where verification remains an open challenge. We introduce a generalized variant of the standard unlearning metric that enables more efficient and precise unlearning strategies. We also present an \emph{unlearning-aware} training procedure that, in many cases, allows for exact unlearning. We term our approach \algname. When exact unlearning is not feasible, \algname still supports efficient unlearning with properties closely matching those achieved through full retraining.
\end{abstract}

\section{Introduction}
Driven by the General Data Protection Regulation (GDPR)~\cite{GDPR} and the California Consumer Privacy Act (CCPA)~\cite{CCPA}, the \emph{right to be forgotten} has recently gained significant importance. Machine unlearning has emerged as a key concept to address this requirement. When a model has been trained using sensitive data from individuals or organizations, any subsequent unlearning request should ensure that the influence of the specified data is effectively removed from the trained model. The resulting unlearned model should ideally behave as if it had been trained from scratch on the remaining data, excluding the specified sensitive data.

Unlearning approaches can be broadly categorized into \emph{exact} and \emph{approximate} methods. Exact procedures have been proposed, for instance, for deep neural networks \cite{bourtoule2021machine}, for graph-based models \cite{chen2022graph}, k-means clustering \cite{ginart2019making}, and via training sample transformations \cite{cao2015towards}. Approximate unlearning techniques based on influence functions were explored in \cite{guo2020certified,warnecke2021machine}, followed by more efficient strategies \cite{suriyakumar2022algorithms,mehta2022deep}. Re-optimization-based approximate unlearning was introduced in \cite{golatkar2020eternal}, aiming to make the unlearned model indistinguishable from a retrained one, while gradient-based methods appeared in \cite{wu2020deltagrad,cao2023fedrecover}. Additionally, machine unlearning can be split into \emph{data-oriented} methods (e.g., through partitioning or data modification) and \emph{model-oriented} methods (e.g., model reset or structural changes). Comprehensive overviews are available in recent surveys \cite{nguyen2022survey,xu2023machine,wang2024machine,liu2025survey,liu2025threats}. Unlearning in federated settings has been explored in \cite{pan2023machine}. An information-theoretic approach for smoothing gradients near the forgotten sample was proposed in \cite{foster2025an}, where the decision boundary behavior was also analyzed. Attempts to reduce the gap between exact and approximate unlearning were made in \cite{jia2023model} through weight sparsification.

Beyond privacy, two major challenges remain for unlearning mechanisms: \emph{efficiency} and \emph{verifiability}. Efficient unlearning is critical for practical scalability, with attempts such as \cite{tarun2024fast} addressing this. Verifiability ensures that unlearning requests have been properly executed. Certified data removal with theoretical guarantees was developed in \cite{guo2019certified}. Membership inference attacks (MIAs) are widely used to verify unlearning, including entropy-based methods \cite{salem2019ml} and general frameworks \cite{ye2022enhanced}; see \cite{niu2024survey} for a detailed MIA survey. Rigorous verification frameworks based on hypothesis testing and backdoor detection were proposed in \cite{sommer2020towards}, while information-theoretic techniques using layer-wise information differences were introduced in \cite{jeon2024information}. However, verification remains fragile \cite{zhang2024verification}, and the limitations of MIAs in general verification were highlighted in \cite{zhang2024membership}, revealing the lack of robust alternatives. Privacy auditing using canaries in LLMs was proposed in \cite{panda2025privacy}. However, this approach is highly specific and unsuitable for general machine learning models.

For individual privacy and the final performance on unseen data, well-generalizing models are desirable. However, such models are also less prone to MIA \cite{li2021membership}, further hindering verification. A similar connection applies to adversarial robustness \cite{madry2018towards}, which was jointly studied with MIA in \cite{ding2024regularization}. It was shown in \cite{song2019membership} that adversarially robust training is more robust to MIA. Even honest providers face difficulties in verifying approximate unlearning due to the lack of reliable verification methods across diverse architectures and training procedures.

To address these challenges, we propose a generalized notion of machine unlearning that improves the efficiency of unlearning procedures. Inspired by the observation that forgetting unimportant data may not significantly impact the model \cite{wang2024machine}, we introduce an unlearning-aware training process. In many cases, this eliminates the need for post-training unlearning altogether, while still providing strong guarantees. When unlearning is required, our approach enables a simple and efficient method to filter out the influence of the target data. Our framework, called \algname, is composed of unlearning-aware training and efficient unlearning, and supports reliable verification via confidence-optimized MIAs. It relies on decomposing the model into a standard feature extractor and a support vector machine, allowing us to exploit the relative importance of training samples during the initial model construction.

\section{System Model and Preliminaries}
We consider a dataset $\D$ consisting of $\nsamples$ samples indexed by $\sampleidx \in [\nsamples]$. Each sample comprises features $\sample[\sampleidx]$ and a corresponding label $\lab[\sampleidx]$. A model $\alg(\D)$ is trained on this dataset using a learning algorithm $\alg$. Within $\D$, a subset of samples $\Df \subset \D$ is later requested to be unlearned (forgotten). The goal is to design an unlearning method $\unlearn$ that, given the trained model $\alg(\D)$, the original training data $\D$, and the data to be forgotten $\Df$, produces an unlearned model $\unlearn(\alg(\D), \D, \Df)$ which no longer retains information about $\Df$.

The prevailing notion of successful unlearning requires that the unlearned model be indistinguishable from a model trained from scratch on the reduced dataset $\D \setminus \Df$ using the same algorithm $\alg$. Formally, for $\Df \subset \D$, this means $\unlearn(\alg(\D), \D, \Df) \approx \alg(\D \setminus \Df)$. In the literature, unlearning methods are typically categorized as either exact or approximate, often defined rigorously over a probability space of model weights or the function space.

Let $\weights$ represent the weights of a model, and $\pweights[\weights]$ denote the probability distribution over these weights. A distance measure $\distance{\pweights[\weights_1]}{\pweights[\weights_2]}$ is used to quantify the difference between two such distributions. Under this formulation, \emph{exact unlearning} is achieved when $\distance{\unlearn(\alg(\D), \D, \Df)}{\alg(\D \setminus \Df)} = 0$. \emph{Approximate unlearning} relaxes this condition by allowing a small tolerance $\epsilon$ in the difference measure \cite{thudi2022unrolling}, such as the Kullback-Leibler divergence employed in \cite{golatkar2020eternal}. The magnitude of $\epsilon$ further distinguishes between \emph{strong} and \emph{weak} approximate unlearning schemes.
Alternatively, the distribution can be defined over the model function space rather than its weights \cite{thudi2022unrolling}. Non-probabilistic definitions have also been explored, e.g., using the $L2$-distance between model parameters~\cite{wu2020deltagrad}.

We propose a generalization of the standard unlearning definition. Rather than comparing only to $\alg(\D \setminus \Df)$, we allow comparisons to any model obtained by training on an arbitrary subset $\Dp \subset \D$ such that $\Df \cap \Dp = \emptyset$, using any learning algorithm $\algvar$ that is independent of $\Df$ and potentially distinct from $\alg$. We adopt the probabilistic framework as in \cite{golatkar2020eternal,nguyen2022survey,xu2023machine}, and formally state this generalized notion of exact unlearning.

\begin{definition}[Generalized exact unlearning] \label{def:relaxed_exact_unlearning}
A model $\alg(\D)$ is said to unlearn $\Df\subset\D$ if there exists an algorithm $\algvar$ independent of $\Df$, and a data subset $\Dp \subseteq \D\setminus \Df$ such that
\begin{align*}
    \distance{\pweights[\unlearn(\alg(\D), \D, \Df)]}{\pweights[\algvar(\Dp)]} = 0.
\end{align*}
\end{definition}
This generalized notion offers greater flexibility for service providers responding to unlearning requests for specific portions of the data. As we will see, with this notion, we can achieve exact unlearning with minimal effort in a substantial fraction of cases. To further reduce the associated effort, an approximate variant of this notion permits a slack $\epsilon$ in the unlearning measure of \cref{def:relaxed_exact_unlearning}, analogous to the approximate relaxations of exact unlearning studied in the literature.

To provide efficient and effective unlearning guarantees according to \cref{def:relaxed_exact_unlearning}, we make use of the structure of support vector machines (SVMs). We briefly revisit them using their dual form. Let $C$ be a regularization parameter, and $K(\sample[\sampleidx], \sample[\sampleidx^\prime])$ a kernel (linear herein). The dual soft-margin SVM optimization problem for binary classification on the data $\D$ is
\begin{align}
\max_{\alpha_1, \cdots, \alpha_\nsamples} \quad & \sum_{\sampleidx=1}^{\nsamples} \alpha_{\sampleidx} - \frac{1}{2} \sum_{\sampleidx=1}^{\nsamples} \sum_{\sampleidx^\prime=1}^{\nsamples} \alpha_{\sampleidx} \alpha_{\sampleidx^\prime} \lab[\sampleidx] \lab[\sampleidx^\prime] K(\sample[\sampleidx], \sample[\sampleidx^\prime]), \label{eq:dual_svm}
\end{align}
subject to $0 \leq \alpha_{\sampleidx} \leq C$, $\sampleidx = 1, \dots, \nsamples$, and $\sum_{\sampleidx=1}^{\nsamples} \alpha_{\sampleidx} \lab[\sampleidx] = 0$, where the $\alpha_{\sampleidx}$ are the dual variables. Let $\sv$ be the set of support vectors, then $\alpha_{\sampleidx} > 0$ iff $\sampleidx \in \sv$. The decision function is
\begin{align*}
f(x) = \text{sign}\Big(\sum_{\sampleidx \in \sv} \alpha_{\sampleidx} \lab[\sampleidx] K(\sample[\sampleidx], x) + b\Big),
\end{align*}%
where $\text{sgin}(x)$ is equal to $1$ if $x\geq 0$ and $-1$ otherwise, and%
\[
b = \frac{1}{\vert \sv \vert} \sum_{\sampleidx = 1}^{\vert \sv \vert} \left( \lab[\sampleidx] - \sum_{\sampleidx^\prime \in \sv} \alpha_{\sampleidx^\prime} \lab[\sampleidx^\prime] K(\sample[\sampleidx], \sample[\sampleidx^\prime]) \right).%
\]
  
For multi-class classification, we use the One-vs-Rest (OvR) method, combining the predictions of multiple binary SVMs into a multi-class prediction.

The verification of unlearned machine learning models plays a key role in privacy auditing. A verification algorithm $\verify[\cdot]$ should be able to distinguish between a model that retains information about a subset $\Df$—such as $\alg(\D)$—and one that has successfully unlearned it, i.e., $\verify[\alg(\D)] \neq \verify[\alg(\D \setminus \Df)]$. We use $\sv$ to denote support vectors and the corresponding indices interchangeably; the meaning will be clear from context.

\section{Illustrative Example of Efficient Unlearning}
To illustrate our core ideas, we begin with a simple case: a support vector machine (SVM) for binary classification. We later generalize these ideas to develop an unlearning method applicable to more complex models. Suppose an SVM with a linear kernel is trained on the full dataset $\D$, yielding a model $\svmmodel$. This predictive model depends solely on the support vectors, which we denote by $\sv \subset \D$.

Now consider an unlearning request for any subset $\Df \subseteq \D \setminus \sv$. In this case, the model $\svmmodel$ is already exactly unlearned in the sense of \cref{def:relaxed_exact_unlearning}, as it is entirely determined by the remaining data $\Dp = \sv$ that satisfies $\Dp\cap\Df = \emptyset$. Moreover, there exists a learning algorithm that, when applied to $\sv$, produces the same model as training on the full dataset $\D$. This is formally captured by the following result.

\begin{figure}[t]
\centering
\begin{tikzpicture}[x=1.5cm, y=0.95cm,
  neuron/.style={circle, draw, minimum size=0.55cm, thick, inner sep=0pt},
  layerbox/.style={draw=black, thick, rounded corners, inner sep=0.15cm}
]

  \node[neuron] (I1) at (0,0.5) {};
  \node[neuron] (I2) at (0,-0.5) {};
  \node at (-0.5, 0) {\small Input};

  \node[neuron] (H11) at (1,1.0) {};
  \node[neuron] (H12) at (1,0.3) {};
  \node[neuron] (H13) at (1,-0.3) {};
  \node[neuron] (H14) at (1,-1.0) {};

  \node[neuron] (H21) at (2,0.7) {};
  \node[neuron] (H22) at (2,0.0) {};
  \node[neuron] (H23) at (2,-0.7) {};

  \node[neuron] (H31) at (3,0.5) {};
  \node[neuron] (H32) at (3,-0.5) {};

  \node[rectangle, draw=black, thick, minimum width=1.1cm, minimum height=1.1cm, align=center] (SVM) at (4.2,0) {\small $\svmmodel$};
  \node at (4.2,.9) {\small SVM trained on $\mathcal{D}$};

  \foreach \i in {I1,I2}
    \foreach \h in {H11,H12,H13,H14}
      \draw[->, thick] (\i) -- (\h);

  \foreach \h in {H11,H12,H13,H14}
    \foreach \hh in {H21,H22,H23}
      \draw[->, thick] (\h) -- (\hh);

  \foreach \h in {H21,H22,H23}
    \foreach \hh in {H31,H32}
      \draw[->, thick] (\h) -- (\hh);

  \foreach \h in {H31,H32}
    \draw[->, thick] (\h) -- (SVM);

  \node[layerbox, fit=(H11)(H12)(H13)(H14)(H21)(H22)(H23)(H31)(H32)] (featbox) {};
  \node at (2,1.8) {\small Feature Extractor trained on $\topk$};
  \node at (2.5,1.2) {\small $\fe$};

\end{tikzpicture}
\vspace{-.2cm}
\caption{Unlearning-aware model architecture in \algname, $\D$ is the entire training set, $\topk$ are the topmost $\kvar$ important samples.}
\label{fig:architecture}
\vspace{-.3cm}
\end{figure}
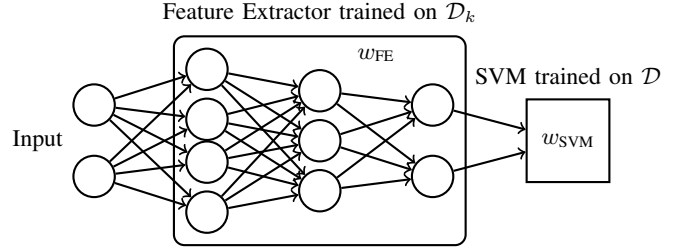

\begin{proposition}
Let $\svmmodel$ be a linear SVM trained on the dataset $\D$, and let $\sv \subset \D$ denote the resulting support vectors. For any $\Df \subseteq \D \setminus \sv$, retraining the SVM on $\D \setminus \Df$ yields the same model, i.e., the resulting SVM is equivalent to $\svmmodel$.
\end{proposition}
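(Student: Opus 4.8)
The plan is to argue entirely in the dual and to show that an optimal dual solution of the SVM trained on $\D$, once we discard the coordinates belonging to $\Df$, remains optimal for the dual problem associated with $\D \setminus \Df$. Write $\{\alpha_{\sampleidx}^\star\}_{\sampleidx \in \D}$ for an optimizer of \cref{eq:dual_svm} on the full data. Since $\alpha_{\sampleidx} > 0$ iff $\sampleidx \in \sv$, and $\Df \subseteq \D \setminus \sv$, every discarded sample carries zero weight, i.e. $\alpha_{\sampleidx}^\star = 0$ for all $\sampleidx \in \Df$. The candidate solution for the reduced problem is therefore $\{\alpha_{\sampleidx}^\star\}_{\sampleidx \in \D \setminus \Df}$, which retains exactly the same positive weights as before.

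First I would check feasibility for the reduced dual. The box constraints $0 \le \alpha_{\sampleidx}^\star \le C$ are inherited verbatim. The equality constraint also survives, because $\sum_{\sampleidx \in \D \setminus \Df} \alpha_{\sampleidx}^\star \lab[\sampleidx] = \sum_{\sampleidx \in \D} \alpha_{\sampleidx}^\star \lab[\sampleidx] - \sum_{\sampleidx \in \Df} \alpha_{\sampleidx}^\star \lab[\sampleidx] = 0 - 0 = 0$, using that the removed weights vanish. The crux is the behaviour of the objective gradient. For any surviving index $\sampleidx \in \D \setminus \Df$, the partial derivative of the reduced objective at the candidate point is $1 - \lab[\sampleidx] \sum_{\sampleidx^\prime \in \D \setminus \Df} \alpha_{\sampleidx^\prime}^\star \lab[\sampleidx^\prime] K(\sample[\sampleidx], \sample[\sampleidx^\prime])$; since the terms indexed by $\Df$ contribute zero, this equals the corresponding partial derivative of the full objective at $\alpha^\star$. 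Hence the gradient seen by every remaining variable is unchanged.

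Consequently, the KKT conditions that certify optimality of $\alpha^\star$ for the full problem---stationarity tied to the single equality multiplier together with complementary slackness for the box constraints---hold verbatim for the reduced problem, with the very same multipliers restricted to $\D \setminus \Df$. Because the Gram matrix with entries $\lab[\sampleidx]\lab[\sampleidx^\prime] K(\sample[\sampleidx],\sample[\sampleidx^\prime])$ is positive semidefinite, \cref{eq:dual_svm} is a concave maximization over a polyhedron, so these KKT conditions are sufficient for global optimality. Thus the restricted $\alpha^\star$ optimizes the reduced dual, and since the weight vector, the bias, and the decision function are all assembled from the unchanged positive weights $\{\alpha_{\sampleidx}^\star\}_{\sampleidx \in \sv}$, the retrained SVM coincides with $\svmmodel$.

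The step I expect to require the most care is optimality transfer under degeneracy: a discarded sample may lie exactly on the margin yet still have $\alpha_{\sampleidx}^\star = 0$. A naive primal argument---that removing an inactive constraint cannot change the optimum---is fragile here, because such a constraint is in fact active. The dual KKT route above sidesteps this pitfall, since it merely re-verifies the identical conditions on the surviving coordinates rather than appealing to strict inactivity. I would close with a remark that the dual optimizer need not be unique, but the primal model is: the primal objective is strongly convex in the weight vector, so any dual optimizer reproduces the same $\svmmodel$, which is exactly what the claimed equivalence requires.
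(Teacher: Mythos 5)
Your proof is correct, and it follows the same basic skeleton as the paper's: both arguments work entirely in the dual \eqref{eq:dual_svm}, both exploit that every sample in $\Df \subseteq \D \setminus \sv$ carries zero optimal dual weight, and both transfer optimality between the full and the reduced problem by restricting/zero-extending dual vectors. The difference lies in how that transfer is justified. The paper argues by contradiction: any solution of the reduced dual extends by zeros to a feasible point of the full dual, so a better reduced optimum would contradict optimality of $\alpha^\star$ on $\D$. You instead produce a direct certificate: the partial derivatives of the reduced objective at the restricted $\alpha^\star$ equal those of the full objective, so the identical KKT multipliers verify stationarity and complementary slackness, and concavity (positive semidefiniteness of the label-weighted Gram matrix) makes these conditions sufficient for global optimality. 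Your route buys genuine robustness at two points the paper glosses over. First, the paper's inference ``the reduced optimum differs, hence a better solution exists for $\D$'' is not literally valid: a distinct reduced optimizer may achieve the same objective value, so the contradiction only rules out a strictly better one; your KKT argument never needs this dichotomy. Second, you explicitly bridge the remaining gap from ``the restricted $\alpha^\star$ is optimal for the reduced dual'' to ``retraining returns the same model'': since the reduced dual may have multiple optimizers, one must argue that they all induce the same primal weight vector (constancy of $w$ over the optimal face, equivalently strong convexity of the primal in $w$), which the paper leaves implicit. The one caveat---shared by both proofs---is that this uniqueness covers the weight vector but not the bias $b$, which in degenerate configurations (e.g., all support vectors at the box bound) need not be unique; up to that standard edge case, your conclusion that the retrained SVM coincides with $\svmmodel$ stands.
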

\begin{proof}
The proof follows from the dual formulation of the SVM in \eqref{eq:dual_svm}. Let $\sv$ denote the support vectors corresponding to the optimal solution of \eqref{eq:dual_svm} on the full dataset $\D$. This solution remains feasible for the dual problem defined on $\D \setminus \Df$, as the removal of $\Df$ simply relaxes the feasibility region by eliminating constraints. However, this does not guarantee that it remains optimal.

Suppose, for contradiction, that the optimal solution on $\D \setminus \Df$ differs from that on $\D$. Then, this new solution is also feasible for the dual problem on $\D$—by setting $\alpha_\sampleidx = 0$ for all $\sampleidx$ corresponding to samples in $\Df$. This would imply that a better solution exists for the original problem on $\D$, contradicting the assumption that the original solution, fully determined by $\sv$, was optimal.
\end{proof}

\begin{algorithm}[t]
\caption{Model Splitting for $\fe \circ \svmmodel$}
\label{alg:training_process}
\begin{algorithmic}[1]
\REQUIRE Training data $\D$, feature extractor $\fe$, prediction layer $\lldense$, SVM $\svmmodel$, loss function $\loss$
\STATE Train $\fe \circ \lldense$ with backpropagation under loss $\mathcal{L}$
\STATE Predict feature embeddings $\embedding[\sampleidx] = \fe(\sample[\sampleidx]), \forall \sampleidx \in [\vert \D \vert]$.
\STATE Fit an $\svmmodel$ on samples $\{\embedding[\sampleidx], \lab\}_{\sampleidx = 1}^{\vert \D \vert}$
\RETURN Final model $\fe \circ \svmmodel$
\end{algorithmic}
\end{algorithm}

Given an unlearning request for a subset of samples $\Df \subset \sv$, the SVM can, in principle, be retrained from scratch on $\D \setminus \Df$, corresponding to traditional exact unlearning. However, to reduce retraining complexity, it is possible to exploit the information about the support vectors identified during the initial training. In particular, an unlearned SVM can be retrained using only the remaining support vectors, $\Dp = \sv \setminus \Df$. This approach offers significant computational savings, albeit potentially reducing the model's performance.

These observations are enabled by the structure of SVMs; in particular, the fact that the final model depends solely on the support vectors, which typically represent a small subset of the training data. For all other samples, no retraining is necessary if they are later requested to be unlearned. In cases where the samples to be forgotten are support vectors, the cost of unlearning can still be reduced by reusing the identity of the remaining support vectors as prior knowledge.

\section{Efficient Unlearning via Model Splitting and Core Sample Selection} \label{sec:main_section}
We introduce an unlearning strategy for general neural networks grounded in two key principles: (i) \emph{model splitting} and (ii) \emph{core sample selection}. 

The core concept of \emph{model splitting} involves decomposing a neural network into two parts: a \emph{feature extractor} (FE) and a \emph{final prediction layer} (PL). In our approach, we substitute the PL with a linear SVM, as illustrated in \cref{fig:architecture}.
This leads to a model architecture defined as the composition $\fe \circ \svmmodel$, where the feature extractor $\fe$ transforms input data into feature embeddings, and the SVM $\svmmodel$ performs classification using these features. A detailed overview of this model splitting process is provided in \cref{alg:training_process}. Given a dataset $\mathcal{D}$, we initially train a conventional neural network in an end-to-end manner. This network is naturally expressed as a composition of two functions: a feature extractor $\fe$ and a final dense layer $\lldense$, resulting in the original model form $\fe \circ \lldense$.
After training, we replace the dense prediction layer $\lldense$ with a linear SVM. To do this, we use the trained feature extractor $\fe$ to compute embeddings for the training data, which are then used to train the SVM classifier $\svmmodel$. This produces the final model $\fe \circ \svmmodel$.

As elaborated in Section~\ref{sec:exact_and_approximate_unlearning}, this model splitting framework enables \emph{approximate} unlearning of all training samples by leveraging techniques analogous to those employed in unlearning for standalone SVM systems, as discussed previously. 

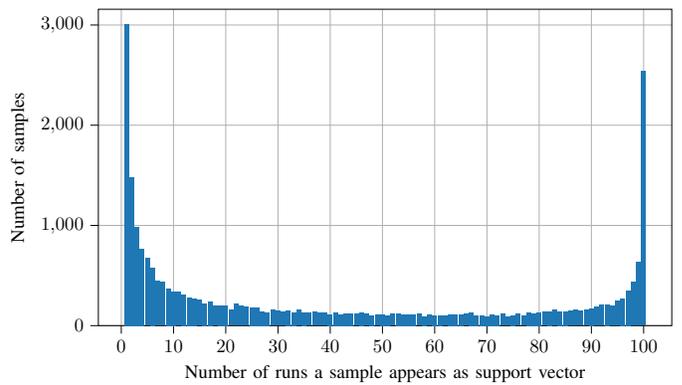
\begin{figure}
    \centering
    \resizebox{\linewidth}{!}{\begin{tikzpicture}

\definecolor{color0}{rgb}{0.12156862745098,0.466666666666667,0.705882352941177}

\begin{axis}[
tick align=outside,
tick pos=left,
x grid style={white!69.0196078431373!black},
xlabel={Number of runs a sample appears as support vector},
xmajorgrids,
xmin=-4.39, xmax=105.39,
xtick style={color=black},
y grid style={white!69.0196078431373!black},
ylabel={Number of samples},
ymajorgrids,
ymin=0, ymax=3157.35,
ytick style={color=black}
]
\draw[draw=none,fill=color0] (axis cs:91.6,0) rectangle (axis cs:92.4,207);
\draw[draw=none,fill=color0] (axis cs:99.6,0) rectangle (axis cs:100.4,2540);
\draw[draw=none,fill=color0] (axis cs:39.6,0) rectangle (axis cs:40.4,111);
\draw[draw=none,fill=color0] (axis cs:33.6,0) rectangle (axis cs:34.4,155);
\draw[draw=none,fill=color0] (axis cs:57.6,0) rectangle (axis cs:58.4,93);
\draw[draw=none,fill=color0] (axis cs:85.6,0) rectangle (axis cs:86.4,147);
\draw[draw=none,fill=color0] (axis cs:21.6,0) rectangle (axis cs:22.4,215);
\draw[draw=none,fill=color0] (axis cs:90.6,0) rectangle (axis cs:91.4,186);
\draw[draw=none,fill=color0] (axis cs:43.6,0) rectangle (axis cs:44.4,122);
\draw[draw=none,fill=color0] (axis cs:98.6,0) rectangle (axis cs:99.4,634);
\draw[draw=none,fill=color0] (axis cs:41.6,0) rectangle (axis cs:42.4,114);
\draw[draw=none,fill=color0] (axis cs:18.6,0) rectangle (axis cs:19.4,200);
\draw[draw=none,fill=color0] (axis cs:95.6,0) rectangle (axis cs:96.4,268);
\draw[draw=none,fill=color0] (axis cs:94.6,0) rectangle (axis cs:95.4,248);
\draw[draw=none,fill=color0] (axis cs:13.6,0) rectangle (axis cs:14.4,269);
\draw[draw=none,fill=color0] (axis cs:23.6,0) rectangle (axis cs:24.4,185);
\draw[draw=none,fill=color0] (axis cs:89.6,0) rectangle (axis cs:90.4,172);
\draw[draw=none,fill=color0] (axis cs:92.6,0) rectangle (axis cs:93.4,212);
\draw[draw=none,fill=color0] (axis cs:88.6,0) rectangle (axis cs:89.4,158);
\draw[draw=none,fill=color0] (axis cs:83.6,0) rectangle (axis cs:84.4,140);
\draw[draw=none,fill=color0] (axis cs:76.6,0) rectangle (axis cs:77.4,96);
\draw[draw=none,fill=color0] (axis cs:82.6,0) rectangle (axis cs:83.4,157);
\draw[draw=none,fill=color0] (axis cs:32.6,0) rectangle (axis cs:33.4,133);
\draw[draw=none,fill=color0] (axis cs:96.6,0) rectangle (axis cs:97.4,351);
\draw[draw=none,fill=color0] (axis cs:60.6,0) rectangle (axis cs:61.4,104);
\draw[draw=none,fill=color0] (axis cs:29.6,0) rectangle (axis cs:30.4,146);
\draw[draw=none,fill=color0] (axis cs:79.6,0) rectangle (axis cs:80.4,131);
\draw[draw=none,fill=color0] (axis cs:58.6,0) rectangle (axis cs:59.4,106);
\draw[draw=none,fill=color0] (axis cs:64.6,0) rectangle (axis cs:65.4,112);
\draw[draw=none,fill=color0] (axis cs:24.6,0) rectangle (axis cs:25.4,179);
\draw[draw=none,fill=color0] (axis cs:40.6,0) rectangle (axis cs:41.4,129);
\draw[draw=none,fill=color0] (axis cs:81.6,0) rectangle (axis cs:82.4,141);
\draw[draw=none,fill=color0] (axis cs:97.6,0) rectangle (axis cs:98.4,436);
\draw[draw=none,fill=color0] (axis cs:61.6,0) rectangle (axis cs:62.4,96);
\draw[draw=none,fill=color0] (axis cs:62.6,0) rectangle (axis cs:63.4,114);
\draw[draw=none,fill=color0] (axis cs:49.6,0) rectangle (axis cs:50.4,114);
\draw[draw=none,fill=color0] (axis cs:93.6,0) rectangle (axis cs:94.4,197);
\draw[draw=none,fill=color0] (axis cs:53.6,0) rectangle (axis cs:54.4,114);
\draw[draw=none,fill=color0] (axis cs:77.6,0) rectangle (axis cs:78.4,132);
\draw[draw=none,fill=color0] (axis cs:22.6,0) rectangle (axis cs:23.4,196);
\draw[draw=none,fill=color0] (axis cs:84.6,0) rectangle (axis cs:85.4,143);
\draw[draw=none,fill=color0] (axis cs:72.6,0) rectangle (axis cs:73.4,115);
\draw[draw=none,fill=color0] (axis cs:51.6,0) rectangle (axis cs:52.4,115);
\draw[draw=none,fill=color0] (axis cs:63.6,0) rectangle (axis cs:64.4,114);
\draw[draw=none,fill=color0] (axis cs:45.6,0) rectangle (axis cs:46.4,130);
\draw[draw=none,fill=color0] (axis cs:11.6,0) rectangle (axis cs:12.4,313);
\draw[draw=none,fill=color0] (axis cs:1.6,0) rectangle (axis cs:2.4,1484);
\draw[draw=none,fill=color0] (axis cs:78.6,0) rectangle (axis cs:79.4,121);
\draw[draw=none,fill=color0] (axis cs:65.6,0) rectangle (axis cs:66.4,116);
\draw[draw=none,fill=color0] (axis cs:86.6,0) rectangle (axis cs:87.4,156);
\draw[draw=none,fill=color0] (axis cs:10.6,0) rectangle (axis cs:11.4,335);
\draw[draw=none,fill=color0] (axis cs:25.6,0) rectangle (axis cs:26.4,184);
\draw[draw=none,fill=color0] (axis cs:74.6,0) rectangle (axis cs:75.4,104);
\draw[draw=none,fill=color0] (axis cs:69.6,0) rectangle (axis cs:70.4,95);
\draw[draw=none,fill=color0] (axis cs:3.6,0) rectangle (axis cs:4.4,769);
\draw[draw=none,fill=color0] (axis cs:34.6,0) rectangle (axis cs:35.4,129);
\draw[draw=none,fill=color0] (axis cs:59.6,0) rectangle (axis cs:60.4,100);
\draw[draw=none,fill=color0] (axis cs:55.6,0) rectangle (axis cs:56.4,113);
\draw[draw=none,fill=color0] (axis cs:68.6,0) rectangle (axis cs:69.4,105);
\draw[draw=none,fill=color0] (axis cs:38.6,0) rectangle (axis cs:39.4,126);
\draw[draw=none,fill=color0] (axis cs:44.6,0) rectangle (axis cs:45.4,118);
\draw[draw=none,fill=color0] (axis cs:80.6,0) rectangle (axis cs:81.4,135);
\draw[draw=none,fill=color0] (axis cs:73.6,0) rectangle (axis cs:74.4,87);
\draw[draw=none,fill=color0] (axis cs:67.6,0) rectangle (axis cs:68.4,101);
\draw[draw=none,fill=color0] (axis cs:19.6,0) rectangle (axis cs:20.4,201);
\draw[draw=none,fill=color0] (axis cs:87.6,0) rectangle (axis cs:88.4,147);
\draw[draw=none,fill=color0] (axis cs:14.6,0) rectangle (axis cs:15.4,256);
\draw[draw=none,fill=color0] (axis cs:8.6,0) rectangle (axis cs:9.4,372);
\draw[draw=none,fill=color0] (axis cs:48.6,0) rectangle (axis cs:49.4,106);
\draw[draw=none,fill=color0] (axis cs:56.6,0) rectangle (axis cs:57.4,115);
\draw[draw=none,fill=color0] (axis cs:37.6,0) rectangle (axis cs:38.4,125);
\draw[draw=none,fill=color0] (axis cs:70.6,0) rectangle (axis cs:71.4,112);
\draw[draw=none,fill=color0] (axis cs:75.6,0) rectangle (axis cs:76.4,118);
\draw[draw=none,fill=color0] (axis cs:66.6,0) rectangle (axis cs:67.4,132);
\draw[draw=none,fill=color0] (axis cs:4.6,0) rectangle (axis cs:5.4,672);
\draw[draw=none,fill=color0] (axis cs:26.6,0) rectangle (axis cs:27.4,141);
\draw[draw=none,fill=color0] (axis cs:71.6,0) rectangle (axis cs:72.4,103);
\draw[draw=none,fill=color0] (axis cs:9.6,0) rectangle (axis cs:10.4,339);
\draw[draw=none,fill=color0] (axis cs:35.6,0) rectangle (axis cs:36.4,126);
\draw[draw=none,fill=color0] (axis cs:6.6,0) rectangle (axis cs:7.4,443);
\draw[draw=none,fill=color0] (axis cs:42.6,0) rectangle (axis cs:43.4,124);
\draw[draw=none,fill=color0] (axis cs:15.6,0) rectangle (axis cs:16.4,219);
\draw[draw=none,fill=color0] (axis cs:2.6,0) rectangle (axis cs:3.4,985);
\draw[draw=none,fill=color0] (axis cs:12.6,0) rectangle (axis cs:13.4,278);
\draw[draw=none,fill=color0] (axis cs:52.6,0) rectangle (axis cs:53.4,123);
\draw[draw=none,fill=color0] (axis cs:20.6,0) rectangle (axis cs:21.4,158);
\draw[draw=none,fill=color0] (axis cs:28.6,0) rectangle (axis cs:29.4,155);
\draw[draw=none,fill=color0] (axis cs:31.6,0) rectangle (axis cs:32.4,152);
\draw[draw=none,fill=color0] (axis cs:27.6,0) rectangle (axis cs:28.4,132);
\draw[draw=none,fill=color0] (axis cs:17.6,0) rectangle (axis cs:18.4,196);
\draw[draw=none,fill=color0] (axis cs:50.6,0) rectangle (axis cs:51.4,104);
\draw[draw=none,fill=color0] (axis cs:36.6,0) rectangle (axis cs:37.4,135);
\draw[draw=none,fill=color0] (axis cs:7.6,0) rectangle (axis cs:8.4,435);
\draw[draw=none,fill=color0] (axis cs:54.6,0) rectangle (axis cs:55.4,111);
\draw[draw=none,fill=color0] (axis cs:46.6,0) rectangle (axis cs:47.4,115);
\draw[draw=none,fill=color0] (axis cs:0.600000000000001,0) rectangle (axis cs:1.4,3007);
\draw[draw=none,fill=color0] (axis cs:47.6,0) rectangle (axis cs:48.4,105);
\draw[draw=none,fill=color0] (axis cs:16.6,0) rectangle (axis cs:17.4,243);
\draw[draw=none,fill=color0] (axis cs:5.6,0) rectangle (axis cs:6.4,578);
\draw[draw=none,fill=color0] (axis cs:30.6,0) rectangle (axis cs:31.4,137);
\end{axis}

\end{tikzpicture}}
    \caption{Comparison of empirical support vector frequency across 100 training runs.} %
    \label{fig:histogram-svs-fmnist}
    \vspace{-.3cm}
\end{figure}

To facilitate \emph{exact} unlearning for a significant portion of the dataset, we propose the concept of \emph{core sample selection}. In this approach, the feature extractor is trained using only a carefully chosen subset of the data—those samples deemed most influential—while the SVM classifier is trained on the entire dataset. We refer to this influential subset as the set of \emph{core samples}. As we will demonstrate, our unlearning method enables efficient and exact unlearning of any non-core sample, which motivates the search for a core set that is as small as possible, without substantially compromising overall model performance.

This naturally raises the question: which subset of the training data is most valuable for learning the feature extractor? Empirical results indicate that omitting support vectors significantly impacts performance, whereas removing non-support vectors has a much smaller effect. Based on this observation, we define core samples by estimating the empirical probability of a sample being selected as a support vector in the final SVM layer across multiple training runs.

Our method for selecting core samples requires several training runs at the start, and unlearning potentially requires retraining the SVM layer, as described next, adding computational complexity to the unlearning algorithm. Therefore, investigating further whether these two steps can be done even more efficiently is an interesting topic for future research.

We explain \algname using numerical experiments. We repeatedly train the composed model $\fe \circ \svmmodel$ on Fashion MNIST\footnote{We also conduct additional experiments on the MNIST dataset, though we observe only minor changes in accuracy—even under extensive unlearning—likely due to the dataset's simplicity.} as dataset $\D, \vert \D \vert = 60 \cdot 10^3$, using \cref{alg:training_process} and the well-known LeNet-5 architecture~\cite{lecun1998gradient} (cf. \cref{app:architecture} for details). Over $100$ independent runs with different random seeds, we track which samples appear as support vectors $\sv$ in each run and record the frequency with which each training sample is selected as a support vector. The resulting histogram in \cref{fig:histogram-svs-fmnist} shows the number of samples that appear as support vectors between $1$ and $100$ times. We observe that many samples consistently appear as support vectors across runs, indicating their importance regardless of the inherent symmetries in FE training.

Using this insight, we rank samples by their frequency of being selected as support vectors. Let $\freq$ denote the frequency for each sample $\sample$. Define $\topk$ to be the subset of the $\kvar$ most frequently occurring support vectors (i.e., core samples), and $\nontopk$ the complement, such that $\vert \topk \vert = \kvar$, $\vert \nontopk\vert = \vert\D\vert -\kvar$, and for all $\sample \in \topk$ and $\sample[\sampleidx'] \in \nontopk$, it holds that $\freq[\sampleidx] \geq \freq[\sampleidx']$.

\begin{table}[t]
\setlength{\tabcolsep}{3pt}
\centering
\caption{Average test accuracies across $100$ runs under different unlearning settings and values of $k \in \{10, 20\} \cdot 10^3$.}
\label{tab:svm-accuracy-settings}
\begin{tabular}{|l|c|c|c||c|c|c|}
\hline
\textbf{Unlearned} & 
\rotatebox{0}{$\topk[10]$} & 
\rotatebox{0}{$\topk[10] \cup \Dr[10k]$} & 
\rotatebox{0}{$\nontopk[10]$} & 
\rotatebox{0}{$\topk[20]$} & 
\rotatebox{0}{$\topk[20] \cup \Dr[10k]$} & 
\rotatebox{0}{$\nontopk[20]$} \\
\hline
No & 0.8971 & 0.8994 & 0.8994 & 0.8997 & 0.8987 & 0.9040 \\
SVM & 0.8964 & 0.8979 & 0.7572 & 0.8932 & 0.8934 & 0.9038 \\
FE+SVM & 0.8874 & 0.8868 & 0.4289 & 0.8601 & 0.8591 & 0.8851 \\
FE & 0.8857 & 0.8863 & 0.8289 & 0.8706 & 0.8641 & 0.8881 \\
\hline
\end{tabular}
\vspace{-.3cm}
\end{table}

With this ranking in place, we compare the impact on the components $\fe$ and $\svmmodel$ in the architecture $\fe \circ \svmmodel$ when it comes to unlearning different subsets of the training samples of different sizes. Thereby, answering the question of which and how many samples to select as core samples for training the FE. We first train a full model $\totalmodel$ on the complete dataset $\D$ using \cref{alg:training_process}, yielding a trained feature extractor $\fe$ and SVM $\svmmodel$. We then identify a subset $\Df \subset \sv$ of support vectors to be unlearned.

We explore the following unlearning scenarios:
\begin{itemize}
    \item \textbf{SVM unlearning (\unlearnedsvm):} Keeping the FE fixed, we retrain the SVM on all remaining feature embeddings $\embedding$, where $\sampleidx \in \D \setminus \Df$.
    \item \textbf{Full model unlearning (\unlearnedmodel):} We remove $\Df$ from $\D$ and retrain both the FE and SVM using \cref{alg:training_process}, yielding a new model $\fe^\prime \circ \svmmodel^\prime$ that no longer includes information about $\Df$.
    \item \textbf{FE-only unlearning (\unlearnedfe):} We retrain the FE on $\D \setminus \Df$ to obtain $\fe^\prime$, and then train a new SVM $\svmmodel^\star$ on the entire dataset (i.e., $\svmmodel^\star$ is trained on embeddings $\embedding^\prime = \fe^\prime(\sample), \sample \in \D$).
\end{itemize}

As before, the unlearning processes are repeated over $100$ trials with different random seeds. We consider two configurations for selecting $\Df$, with $\kvar \in \{10, 20\} \cdot 10^3$:
\begin{enumerate}
    \item $\Df = \topk$: unlearning the $k$ most important samples,
    \item $\Df = \nontopk$: unlearning the $\nsamples - k$ least important samples,
    \item $\Df = \topk \cup \Dr$: unlearning the $k$ most important samples plus $10^4$ randomly selected samples $\Dr$ from $\nontopk$, provided that $\nsamples - \kvar > 10^4$.
\end{enumerate}

The results are presented in \cref{tab:svm-accuracy-settings}. We observe that:
\begin{itemize}
    \item Unlearning $\topk$ samples has limited impact on accuracy when $\kvar = 10^4$, but becomes more significant for $\kvar = 20 \cdot 10^3$. Unlearning $\Dr$ in addition to $\topk$ has negligible impact.
    \item Removing $40 \cdot 10^3$ of the least important samples has only a minor effect, but removing $50 \cdot 10^3$ least important samples causes a large drop in accuracy (down to $0.4289$ for FE+SVM).
    \item Notably, if we train the FE using only the $10^4$ most important samples and train the SVM on all data, performance improves significantly—from $0.4289$ to $0.8289$.
\end{itemize}

This significant accuracy boost indicates that while the SVM benefits from access to the full dataset, the FE can be trained on a much smaller core set without major performance degradation. Since unlearning the FE is often the most computationally expensive step, this insight enables an efficient, unlearning-aware strategy. Specifically, for any $\Df \subseteq \nontopk[10^4]$, this strategy supports \emph{exact} unlearning as defined in \cref{def:relaxed_exact_unlearning}.

\begin{algorithm}[t]
\caption{Membership Inference Attack (MIA) via Cross-Entropy Confidence Metric}
\label{alg:mia}
\begin{algorithmic}[1]
\REQUIRE $\confidence[\D]$, $\confidence[\Dt]$, query samples $\Df$
\STATE Construct membership dataset $\mathcal{M} \define \{(\confidence[\D \setminus \Df], \mathbf{1}), (\confidence[\Dt], \mathbf{0})\}$
\STATE Compute ROC and obtain $\fpr_\thres, \tpr_\thres \gets \text{ROC}(\mathcal{M})$ for thresholds $\thres \in \mathcal{T}$
\STATE Optimize threshold $\optthres = \arg\max_{\thres \in \mathcal{T}} \tpr_\thres - \fpr_\thres$
\STATE Predict membership of queried samples $\mathcal{H} = \confidence[\Df] < \optthres$
\RETURN Return ``member'' if $\mathcal{H} == \text{true}$
\end{algorithmic}
\end{algorithm}

\subsection{Unlearning-Aware Model Training} \label{sec:unlearning_aware_model}
We now introduce our unlearning strategy, \algname, which is inspired by the observations detailed above. Assume access to a ranking of sample importance that identifies the top $k$ most influential training samples, denoted as $\topk$, such as the ranking method described previously. Prior to any unlearning requests, we train the feature extractor $\fe$ exclusively on the core set $\topk$, for some choice of $\kvar > 0$, using the procedure outlined in \cref{alg:training_process}. 

Once the feature extractor $\fe$ has been trained, we obtain feature embeddings for the full dataset $\D$ and use these to train a linear SVM $\svmmodel^\star$. For future reference, let $\sv$ denote the set of support vectors resulting from training $\svmmodel^\star$. The overall procedure results in the final model architecture $\fe \circ \svmmodel^\star$, which, as demonstrated in \cref{tab:svm-accuracy-settings}, achieves competitive performance despite training the feature extractor on only a subset of the data. We choose $\kvar = 20 \cdot 10^3$ in our experiments.

\subsection{Exact and Approximate Unlearning} \label{sec:exact_and_approximate_unlearning}

Let us now discuss how well unlearning works for our proposed strategy \algname. 
Depending on the nature of the requested unlearning data $\Df$, we distinguish the following two scenarios. When $\Df \cap \topk = \emptyset$, exact unlearning guarantees can be given; otherwise, we conduct approximate unlearning.

\paragraph{Exact Unlearning} For any unlearning request $\Df \subseteq \D \setminus (\sv \cup \topk)$, the model is per design exactly unlearned in the sense of \cref{def:relaxed_exact_unlearning}. This is because the feature extractor $\fe$ is entirely independent of $\D \setminus \topk$, and the SVM $\svmmodel^\star$ depends only on the support vectors $\sv$.

Moreover, since $\topk$ consists of samples frequently selected as support vectors across multiple runs (see above), we expect substantial overlap between $\sv$ and $\topk$. Consequently, a significant portion of the dataset $\D$ is eligible for exact unlearning. For example, in our experiments with $\topk[20 \cdot 10^3]$, we observed an average of $\vert \sv \cup \topk \vert = 20.5 \cdot 10^3$ over $20$ independent runs.

For an unlearning request $\Df \subset \sv \setminus \topk$, i.e., when the data to be unlearned appears only in the SVM but not in the feature extractor (FE), our method provides an efficient and exact unlearning protocol. In this case, only the final layer—here, the SVM—needs to be retrained. This is significantly less costly than retraining the FE, particularly for complex models, while still offering exact unlearning guarantees.

Referring to \cref{def:relaxed_exact_unlearning}, \algname trains the initial model $\alg(\D)$ by running \cref{alg:training_process} on $\topk$ to obtain $\fe$, and subsequently training the SVM on all of $\D$. Upon an unlearning request, the procedure reuses $\fe$ and retrains the SVM on the remaining data $\Dp = \D \setminus \Df$. This is functionally equivalent to running $\algvar(\Dp)$, which applies \cref{alg:training_process} on $\topk \subset \Dp$ to obtain $\fe$, and then trains a new SVM $\svmmodel^\star$ on $\Dp$. Since the composition of $\alg$ and $\unlearn$ is equivalent to $\algvar$, and $\algvar$ is independent of $\Df$, the generalized exact unlearning criterion is satisfied. The same logic holds even when the SVM is retrained on a subset $\Dp \subset \D \setminus \Df$, which can further reduce the computational costs of unlearning.

\begin{figure}
    \centering
    \resizebox{\linewidth}{!}{\begin{tikzpicture}

\definecolor{color0}{rgb}{0.12156862745098,0.466666666666667,0.705882352941177}
\definecolor{color1}{rgb}{1,0.498039215686275,0.0549019607843137}

\begin{axis}[
legend cell align={left},
legend style={fill opacity=0.8, draw opacity=1, text opacity=1, at={(0.03,0.97)}, anchor=north west, draw=white!80!black},
tick align=outside,
tick pos=left,
x grid style={white!69.0196078431373!black},
xlabel={Number of runs \(\displaystyle x\) a sample is claimed unlearned},
xmin=-1.43, xmax=21.45,
xtick style={color=black},
y grid style={white!69.0196078431373!black},
ylabel={Normalized number of samples},
ymajorgrids,
ymin=0, ymax=0.6130425,
ytick style={color=black},
ytick={0,0.1,0.2,0.3,0.4,0.5,0.6,0.7},
yticklabels={0.0,0.1,0.2,0.3,0.4,0.5,0.6,0.7}
]
\draw[draw=none,fill=color0] (axis cs:-0.39,0) rectangle (axis cs:0.01,0.03445);
\addlegendimage{ybar,ybar legend,draw=none,fill=color0};
\addlegendentry{Perfect unlearning}

\draw[draw=none,fill=color0] (axis cs:0.61,0) rectangle (axis cs:1.01,0.01775);
\draw[draw=none,fill=color0] (axis cs:1.61,0) rectangle (axis cs:2.01,0.01295);
\draw[draw=none,fill=color0] (axis cs:2.61,0) rectangle (axis cs:3.01,0.01275);
\draw[draw=none,fill=color0] (axis cs:3.61,0) rectangle (axis cs:4.01,0.01135);
\draw[draw=none,fill=color0] (axis cs:4.61,0) rectangle (axis cs:5.01,0.0112);
\draw[draw=none,fill=color0] (axis cs:5.61,0) rectangle (axis cs:6.01,0.0122);
\draw[draw=none,fill=color0] (axis cs:6.61,0) rectangle (axis cs:7.01,0.01285);
\draw[draw=none,fill=color0] (axis cs:7.61,0) rectangle (axis cs:8.01,0.0119);
\draw[draw=none,fill=color0] (axis cs:8.61,0) rectangle (axis cs:9.01,0.01175);
\draw[draw=none,fill=color0] (axis cs:9.61,0) rectangle (axis cs:10.01,0.0137);
\draw[draw=none,fill=color0] (axis cs:10.61,0) rectangle (axis cs:11.01,0.01565);
\draw[draw=none,fill=color0] (axis cs:11.61,0) rectangle (axis cs:12.01,0.01505);
\draw[draw=none,fill=color0] (axis cs:12.61,0) rectangle (axis cs:13.01,0.0166);
\draw[draw=none,fill=color0] (axis cs:13.61,0) rectangle (axis cs:14.01,0.0181);
\draw[draw=none,fill=color0] (axis cs:14.61,0) rectangle (axis cs:15.01,0.02125);
\draw[draw=none,fill=color0] (axis cs:15.61,0) rectangle (axis cs:16.01,0.02565);
\draw[draw=none,fill=color0] (axis cs:16.61,0) rectangle (axis cs:17.01,0.0295);
\draw[draw=none,fill=color0] (axis cs:17.61,0) rectangle (axis cs:18.01,0.0428);
\draw[draw=none,fill=color0] (axis cs:18.61,0) rectangle (axis cs:19.01,0.0687);
\draw[draw=none,fill=color0] (axis cs:19.61,0) rectangle (axis cs:20.01,0.58385);
\draw[draw=none,fill=color1] (axis cs:0.00999999999999998,0) rectangle (axis cs:0.41,0.007);
\addlegendimage{ybar,ybar legend,draw=none,fill=color1};
\addlegendentry{\algname}

\draw[draw=none,fill=color1] (axis cs:1.01,0) rectangle (axis cs:1.41,0.00895);
\draw[draw=none,fill=color1] (axis cs:2.01,0) rectangle (axis cs:2.41,0.0103);
\draw[draw=none,fill=color1] (axis cs:3.01,0) rectangle (axis cs:3.41,0.01055);
\draw[draw=none,fill=color1] (axis cs:4.01,0) rectangle (axis cs:4.41,0.01295);
\draw[draw=none,fill=color1] (axis cs:5.01,0) rectangle (axis cs:5.41,0.0131);
\draw[draw=none,fill=color1] (axis cs:6.01,0) rectangle (axis cs:6.41,0.0147);
\draw[draw=none,fill=color1] (axis cs:7.01,0) rectangle (axis cs:7.41,0.0144);
\draw[draw=none,fill=color1] (axis cs:8.01,0) rectangle (axis cs:8.41,0.0183);
\draw[draw=none,fill=color1] (axis cs:9.01,0) rectangle (axis cs:9.41,0.0174);
\draw[draw=none,fill=color1] (axis cs:10.01,0) rectangle (axis cs:10.41,0.0205);
\draw[draw=none,fill=color1] (axis cs:11.01,0) rectangle (axis cs:11.41,0.0199);
\draw[draw=none,fill=color1] (axis cs:12.01,0) rectangle (axis cs:12.41,0.0229);
\draw[draw=none,fill=color1] (axis cs:13.01,0) rectangle (axis cs:13.41,0.0257);
\draw[draw=none,fill=color1] (axis cs:14.01,0) rectangle (axis cs:14.41,0.02745);
\draw[draw=none,fill=color1] (axis cs:15.01,0) rectangle (axis cs:15.41,0.03045);
\draw[draw=none,fill=color1] (axis cs:16.01,0) rectangle (axis cs:16.41,0.0349);
\draw[draw=none,fill=color1] (axis cs:17.01,0) rectangle (axis cs:17.41,0.04485);
\draw[draw=none,fill=color1] (axis cs:18.01,0) rectangle (axis cs:18.41,0.0547);
\draw[draw=none,fill=color1] (axis cs:19.01,0) rectangle (axis cs:19.41,0.09065);
\draw[draw=none,fill=color1] (axis cs:20.01,0) rectangle (axis cs:20.41,0.50035);
\end{axis}

\end{tikzpicture}}
    \caption{Unlearning $\topk[20 \cdot 10^3]$ over $20$ runs $x \in \{1, \cdots, 20\}$.}
    \label{fig:unlearning_top_svs}
    \vspace{-.3cm}
\end{figure}

\paragraph{Approximate Unlearning}

A more critical case arises when $\Df \subset \topk$, meaning the samples to be unlearned are part of the FE's training data. In this case, we apply the same SVM-only unlearning strategy. While effective in practice, this no longer satisfies the formal guarantees of \cref{def:relaxed_exact_unlearning}.

To evaluate the success of this approximate unlearning, we apply a verification procedure $\verify[\cdot]$ based on an MIA. Since full retraining of the FE is avoided, we rely on black-box validation to assess whether $\Df$'s influence on the final model has been sufficiently removed.

Our verification method uses Platt scaling to project the SVM output onto the probability simplex, enabling the use of confidence-based MIA metrics such as cross-entropy. Given ground-truth labels and model predictions, we compute cross-entropy losses on training data $\D$ and independent test data $\Dt$ (with $10 \cdot 10^3$ samples from Fashion MNIST), which we refer to as confidences $\confidence[\D]$ and $\confidence[\Dt]$. Assuming test data corresponds to non-members and training data to members (excluding $\Df$), we optimize a threshold $\optthres$ that best separates the two distributions based on true and false positive rates. The membership of a query sample in $\Df$ is then inferred by comparing its confidence score to this threshold. The entire procedure is summarized in \cref{alg:mia}.

We repeat this process over $20$ training runs with different seeds to account for randomness in training. The FE is trained on $\topk[20 \cdot 10^3]$, and the SVM is trained on $\D$. We then simulate an unlearning request for $\Df = \topk[20 \cdot 10^3]$, and apply our strategy by retraining the SVM on $\Dp = \D \setminus \Df$, while keeping the original $\fe$. We evaluate how often the MIA identifies samples in $\Df$ as unlearned across runs and compare this with a fully retrained model (i.e., unlearned FE and SVM). The results, shown in \cref{fig:unlearning_top_svs}, demonstrate negligible differences between the two approaches.

Although this suggests that our strategy performs well under black-box auditing (input-output access only), we caution that MIA-based verification is limited and not always reliable. Nevertheless, the high similarity in verification results implies that the contribution of $\Df$ may be effectively removed through our SVM retraining. Interestingly, the model obtained via \algname achieves a higher average accuracy ($0.879795$) compared to the exact retraining baseline ($0.86252$), due to the retained utility of $\topk$ in the FE.

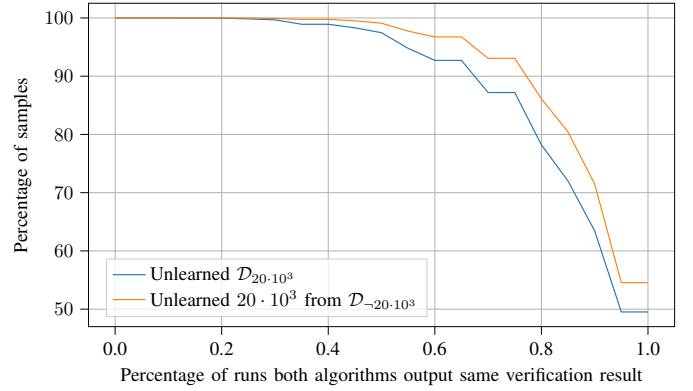
\begin{figure}[t]
    \centering
    \resizebox{\linewidth}{!}{\begin{tikzpicture}

\definecolor{color0}{rgb}{0.12156862745098,0.466666666666667,0.705882352941177}
\definecolor{color1}{rgb}{1,0.498039215686275,0.0549019607843137}

\begin{axis}[
legend cell align={left},
legend style={fill opacity=0.8, draw opacity=1, text opacity=1, at={(0.03,0.03)}, anchor=south west, draw=white!80!black},
tick align=outside,
tick pos=left,
x grid style={white!69.0196078431373!black},
xlabel={Percentage of runs both algorithms output same verification result},
xmajorgrids,
xmin=-0.05, xmax=1.05,
xtick style={color=black},
xtick={-0.2,0,0.2,0.4,0.6,0.8,1,1.2},
xticklabels={−0.2,0.0,0.2,0.4,0.6,0.8,1.0,1.2},
y grid style={white!69.0196078431373!black},
ylabel={Percentage of samples},
ymajorgrids,
ymin=46.996, ymax=102.524,
ytick style={color=black}
]
\addplot [semithick, color0]
table {%
0 100
0.05 100
0.1 100
0.15 99.985
0.2 99.955
0.25 99.845
0.3 99.675
0.35 98.915
0.4 98.915
0.45 98.305
0.5 97.465
0.55 94.78
0.6 92.715
0.65 92.715
0.7 87.195
0.75 87.195
0.8 78.215
0.85 72
0.9 63.425
0.95 49.52
1 49.52
};
\addlegendentry{Unlearned $\mathcal{D}_{20 \cdot 10^3}$}
\addplot [semithick, color1]
table {%
0 100
0.05 100
0.1 100
0.15 100
0.2 99.995
0.25 99.985
0.3 99.91
0.35 99.755
0.4 99.755
0.45 99.515
0.5 99.09
0.55 97.74
0.6 96.745
0.65 96.745
0.7 93.065
0.75 93.065
0.8 86.12
0.85 80.425
0.9 71.475
0.95 54.55
1 54.55
};
\addlegendentry{Unlearned $20 \cdot 10^3$ from $\mathcal{D}_{\neg 20 \cdot 10^3}$}
\end{axis}

\end{tikzpicture}}
    \caption{Comparison of verification results for samples $\Df$.}
    \label{fig:comparison_unlearning_strategies}
    \vspace{-.3cm}
\end{figure}

To further support our claims, we compare verification results between our method and the exact retraining baseline over two unlearning scenarios: (i) removing $\topk[20 \cdot 10^3]$, and (ii) removing a random subset of $20 \cdot 10^3$ samples from $\nontopk[20 \cdot 10^3]$ (see \cref{app:unlearn_non_core} for details). For each sample in $\Df$, we count in how many of the $20$ runs both strategies lead to the same MIA classification. \cref{fig:comparison_unlearning_strategies} shows that roughly $80\%$ of the samples were classified identically in $80\%$ of the runs, and roughly $50\%$ of the samples were identically classified in all runs—further validating our approach.

Lastly, in scenarios requiring stronger unlearning guarantees, our method is fully compatible with other established unlearning techniques. Thus, if stricter guarantees are needed, \algname can serve as a preprocessing step or be integrated with more rigorous strategies to remove any residual dependencies on $\Df$.

\bibliographystyle{IEEEtran}
\bibliography{refs}

\begin{thebibliography}{10}
\providecommand{\url}[1]{#1}
\csname url@samestyle\endcsname
\providecommand{\newblock}{\relax}
\providecommand{\bibinfo}[2]{#2}
\providecommand{\BIBentrySTDinterwordspacing}{\spaceskip=0pt\relax}
\providecommand{\BIBentryALTinterwordstretchfactor}{4}
\providecommand{\BIBentryALTinterwordspacing}{\spaceskip=\fontdimen2\font plus
\BIBentryALTinterwordstretchfactor\fontdimen3\font minus \fontdimen4\font\relax}
\providecommand{\BIBforeignlanguage}[2]{{%
\expandafter\ifx\csname l@#1\endcsname\relax
\typeout{** WARNING: IEEEtran.bst: No hyphenation pattern has been}%
\typeout{** loaded for the language `#1'. Using the pattern for}%
\typeout{** the default language instead.}%
\else
\language=\csname l@#1\endcsname
\fi
#2}}
\providecommand{\BIBdecl}{\relax}
\BIBdecl

\bibitem{GDPR}
{European Parliament and Council of the European Union}, ``General data protection regulation,'' \url{https://eur-lex.europa.eu/eli/reg/2016/679/oj/}, april 2016.

\bibitem{CCPA}
{State of California Department of Justice}, ``California consumer privacy act of 2018,'' \url{https://oag.ca.gov/privacy/ccpa/}, updated on March 13, 2024.

\bibitem{bourtoule2021machine}
L.~Bourtoule, V.~Chandrasekaran, C.~A. Choquette-Choo, H.~Jia, A.~Travers, B.~Zhang, D.~Lie, and N.~Papernot, ``Machine unlearning,'' in \emph{IEEE symposium on security and privacy}, 2021, pp. 141--159.

\bibitem{chen2022graph}
M.~Chen, Z.~Zhang, T.~Wang, M.~Backes, M.~Humbert, and Y.~Zhang, ``Graph unlearning,'' in \emph{ACM SIGSAC conference on computer and communications security}, 2022, pp. 499--513.

\bibitem{ginart2019making}
A.~Ginart, M.~Guan, G.~Valiant, and J.~Y. Zou, ``Making {AI} forget you: Data deletion in machine learning,'' \emph{Advances in neural information processing systems}, vol.~32, 2019.

\bibitem{cao2015towards}
Y.~Cao and J.~Yang, ``Towards making systems forget with machine unlearning,'' in \emph{IEEE symposium on security and privacy}, 2015, pp. 463--480.

\bibitem{guo2020certified}
C.~Guo, T.~Goldstein, A.~Hannun, and L.~Van Der~Maaten, ``Certified data removal from machine learning models,'' in \emph{International Conference on Machine Learning}, 2020, pp. 3832--3842.

\bibitem{warnecke2021machine}
A.~Warnecke, L.~Pirch, C.~Wressnegger, and K.~Rieck, ``Machine unlearning of features and labels,'' \emph{arXiv preprint arXiv:2108.11577}, 2021.

\bibitem{suriyakumar2022algorithms}
V.~Suriyakumar and A.~C. Wilson, ``Algorithms that approximate data removal: New results and limitations,'' \emph{Advances in Neural Information Processing Systems}, vol.~35, pp. 18\,892--18\,903, 2022.

\bibitem{mehta2022deep}
R.~Mehta, S.~Pal, V.~Singh, and S.~N. Ravi, ``Deep unlearning via randomized conditionally independent hessians,'' in \emph{IEEE/CVF Conference on Computer Vision and Pattern Recognition}, 2022, pp. 10\,422--10\,431.

\bibitem{golatkar2020eternal}
A.~Golatkar, A.~Achille, and S.~Soatto, ``Eternal sunshine of the spotless net: Selective forgetting in deep networks,'' in \emph{IEEE/CVF conference on computer vision and pattern recognition}, 2020, pp. 9304--9312.

\bibitem{wu2020deltagrad}
Y.~Wu, E.~Dobriban, and S.~Davidson, ``Deltagrad: Rapid retraining of machine learning models,'' in \emph{International Conference on Machine Learning}, 2020, pp. 10\,355--10\,366.

\bibitem{cao2023fedrecover}
X.~Cao, J.~Jia, Z.~Zhang, and N.~Z. Gong, ``Fedrecover: Recovering from poisoning attacks in federated learning using historical information,'' in \emph{IEEE Symposium on Security and Privacy (SP)}, 2023, pp. 1366--1383.

\bibitem{nguyen2022survey}
T.~T. Nguyen, T.~T. Huynh, Z.~Ren, P.~L. Nguyen, A.~W.-C. Liew, H.~Yin, and Q.~V.~H. Nguyen, ``A survey of machine unlearning,'' \emph{arXiv preprint arXiv:2209.02299}, 2022.

\bibitem{xu2023machine}
H.~Xu, T.~Zhu, L.~Zhang, W.~Zhou, and P.~S. Yu, ``Machine unlearning: A survey,'' \emph{ACM Comput. Surv.}, vol.~56, no.~1, 2023.

\bibitem{wang2024machine}
W.~Wang, Z.~Tian, C.~Zhang, and S.~Yu, ``Machine unlearning: A comprehensive survey,'' \emph{arXiv preprint arXiv:2405.07406}, 2024.

\bibitem{liu2025survey}
H.~Liu, P.~Xiong, T.~Zhu, and P.~S. Yu, ``A survey on machine unlearning: Techniques and new emerged privacy risks,'' \emph{Journal of Information Security and Applications}, vol.~90, p. 104010, 2025.

\bibitem{liu2025threats}
Z.~Liu, H.~Ye, C.~Chen, Y.~Zheng, and K.-Y. Lam, ``Threats, attacks, and defenses in machine unlearning: A survey,'' \emph{IEEE Open Journal of the Computer Society}, 2025.

\bibitem{pan2023machine}
C.~Pan, J.~Sima, S.~Prakash, V.~Rana, and O.~Milenkovic, ``Machine unlearning of federated clusters,'' in \emph{International Conference on Learning Representations}, 2023.

\bibitem{foster2025an}
J.~Foster, K.~Fogarty, S.~Schoepf, Z.~Dugue, C.~Oztireli, and A.~Brintrup, ``An information theoretic approach to machine unlearning,'' \emph{Transactions on Machine Learning Research}, 2025.

\bibitem{jia2023model}
J.~Jia, J.~Liu, P.~Ram, Y.~Yao, G.~Liu, Y.~Liu, P.~Sharma, and S.~Liu, ``Model sparsity can simplify machine unlearning,'' \emph{Advances in Neural Information Processing Systems}, vol.~36, pp. 51\,584--51\,605, 2023.

\bibitem{tarun2024fast}
A.~K. Tarun, V.~S. Chundawat, M.~Mandal, and M.~Kankanhalli, ``Fast yet effective machine unlearning,'' \emph{IEEE Transactions on Neural Networks and Learning Systems}, vol.~35, no.~9, pp. 13\,046--13\,055, 2024.

\bibitem{guo2019certified}
C.~Guo, T.~Goldstein, A.~Hannun, and L.~Van Der~Maaten, ``Certified data removal from machine learning models,'' \emph{arXiv preprint arXiv:1911.03030}, 2019.

\bibitem{salem2019ml}
A.~Salem, Y.~Zhang, M.~Humbert, M.~Fritz, and M.~Backes, ``Ml-leaks: Model and data independent membership inference attacks and defenses on machine learning models,'' 2019.

\bibitem{ye2022enhanced}
J.~Ye, A.~Maddi, S.~K. Murakonda, V.~Bindschaedler, and R.~Shokri, ``Enhanced membership inference attacks against machine learning models,'' in \emph{ACM SIGSAC Conference on Computer and Communications Security}, 2022, pp. 3093--3106.

\bibitem{niu2024survey}
J.~Niu, P.~Liu, X.~Zhu, K.~Shen, Y.~Wang, H.~Chi, Y.~Shen, X.~Jiang, J.~Ma, and Y.~Zhang, ``A survey on membership inference attacks and defenses in machine learning,'' \emph{Journal of Information and Intelligence}, 2024.

\bibitem{sommer2020towards}
D.~M. Sommer, L.~Song, S.~Wagh, and P.~Mittal, ``Towards probabilistic verification of machine unlearning,'' \emph{arXiv preprint arXiv:2003.04247}, 2020.

\bibitem{jeon2024information}
D.~Jeon, W.~Jeung, T.~Kim, A.~No, and J.~Choi, ``An information theoretic evaluation metric for strong unlearning,'' \emph{arXiv preprint arXiv:2405.17878}, 2024.

\bibitem{zhang2024verification}
B.~Zhang, Z.~Chen, C.~Shen, and J.~Li, ``Verification of machine unlearning is fragile,'' in \emph{International Conference on Machine Learning}, 2024.

\bibitem{zhang2024membership}
J.~Zhang, D.~Das, G.~Kamath, and F.~Tram{\`e}r, ``Membership inference attacks cannot prove that a model was trained on your data,'' \emph{arXiv preprint arXiv:2409.19798}, 2024.

\bibitem{panda2025privacy}
A.~Panda, X.~Tang, M.~Nasr, C.~A. Choquette-Choo, and P.~Mittal, ``Privacy auditing of large language models,'' \emph{arXiv preprint arXiv:2503.06808}, 2025.

\bibitem{li2021membership}
J.~Li, N.~Li, and B.~Ribeiro, ``Membership inference attacks and defenses in classification models,'' in \emph{ACM Conference on Data and Application Security and Privacy}, 2021, pp. 5--16.

\bibitem{madry2018towards}
A.~Madry, A.~Makelov, L.~Schmidt, D.~Tsipras, and A.~Vladu, ``Towards deep learning models resistant to adversarial attacks,'' in \emph{International Conference on Learning Representations}, 2018.

\bibitem{ding2024regularization}
Z.~Ding, Y.~Tian, G.~Wang, and J.~Xiong, ``Regularization mixup adversarial training: A defense strategy for membership privacy with model availability assurance,'' in \emph{International Conference on Big Data and Privacy Computing (BDPC)}, 2024, pp. 206--212.

\bibitem{song2019membership}
L.~Song, R.~Shokri, and P.~Mittal, ``Membership inference attacks against adversarially robust deep learning models,'' in \emph{IEEE Security and Privacy Workshops (SPW)}, 2019, pp. 50--56.

\bibitem{thudi2022unrolling}
A.~Thudi, G.~Deza, V.~Chandrasekaran, and N.~Papernot, ``Unrolling sgd: Understanding factors influencing machine unlearning,'' in \emph{IEEE European Symposium on Security and Privacy}, 2022, pp. 303--319.

\bibitem{lecun1998gradient}
Y.~LeCun, L.~Bottou, Y.~Bengio, and P.~Haffner, ``Gradient-based learning applied to document recognition,'' \emph{Proceedings of the IEEE}, vol.~86, no.~11, pp. 2278--2324, 1998.

\end{thebibliography}

\appendix

We provide a brief outline for the appendix. In \cref{sec:sensitivity}, we provide a study on the sensitivity of the FE and the SVM with respect to unlearning random support vectors, showing that the FE is more sensitive than the SVM to unlearning samples deemed important by the SVM. In \cref{app:unlearn_non_core}, we provide details on the unlearning of $20 \cdot 10^3$ non-core samples for which \algname provides exact unlearning guarantees. %
Complementary to the experiments in \cref{sec:exact_and_approximate_unlearning}, we further provide in \cref{app:unlearn_small_sample_sets} examples on unlearning only $2000$ core or non-core samples, instead of $20 \cdot 10^3$, showing comparable results. The details on the model architectures and dataset used are provided in \cref{app:architecture}.

\subsection{Sensitivity Analysis for FE and SVM \label{sec:sensitivity}}
Similar to \cref{sec:main_section}, we investigate the sensitivity of the individual components $\fe$ and $\svmmodel$ in the architecture $\fe \circ \svmmodel$ with respect to the removal of certain samples from the training set. Therefore, we train a model $\totalmodel$ using \cref{alg:training_process} on all training data $\D$, thereby obtaining a feature extractor $\fe$ and an SVM $\svmmodel$. We identify the support vectors $\sv$ of $\svmmodel$ and select a certain fraction of samples $\Df \subset \sv$ to be unlearned. Fixing the FE, we retrain the SVM on all embeddings $\embedding, \sampleidx \in \D \setminus \Df$. Similarly, we remove the samples $\Df$ from the training set $\D$ and retrain the FE and the SVM using \cref{alg:training_process}, thereby observing a full unlearned model $\fe^\prime \circ \svmmodel^\prime$ not containing any information about the samples in $\Df$.
Lastly, we take the unlearned FE $\fe^\prime$ and use it to train an SVM $\svmmodel^\star$ on the entire training data, i.e., trained on the embeddings $\embedding^\prime = \fe(\sample), \sampleidx \in \D$. %

We again repeat this process for $100$ rounds using different seeds for the training and the selection of $\Df$ out of the support vectors, and compare the resulting model accuracies on an independent test set. Unlearning the samples from the SVM diminishes the accuracy on Fashion MNIST by only a negligible amount, i.e., from $0.9000$ to $0.8982$ on average.
Unlearning the samples $\Df$ from the FE has a more noticeable impact, going from $0.9000$ to $0.8882$ when the SVM $\svmmodel^\star$ on the last layer is trained on all the data. For $\fe^\prime$, additionally removing the samples $\Df$ from the last layer, i.e., using the model $\svmmodel^\prime$, has only a limited effect on the performance and results in average accuracies of $0.8871$. We conclude that removing samples selected as support vectors by the SVM in the initial training has a larger impact on the FE than on the SVM. We made use of this observation to propose an efficient unlearning strategy by adapting the initial training algorithm.

\subsection{Verification of Unlearned Non-Core Samples} \label{app:unlearn_non_core}

In addition to the verification experiments in \cref{sec:unlearning_aware_model} for the unlearning of core samples, we study the verification results for cases in which we are asked to unlearn $20\cdot 10^3$ samples $\Df$ from the less important samples, i.e., $\Df \subset \nontopk[20\cdot 10^3], \vert \Df \vert = 20\cdot 10^3$. Although our unlearning strategy achieves exact unlearning guarantees according to \cref{def:relaxed_exact_unlearning}, we analyze the results from the MIA-based verification in \cref{fig:unlearning_nontop_svs}. To improve the MIA, we use only the samples $\D \setminus \topk[20\cdot 10^3] \setminus \Df$ to construct the membership dataset to find the optimal threshold $\optthres$. That is, we exclude the core samples from optimizing the MIA, since they are not representative of the samples $\Df$. The set $\D \setminus \topk[20\cdot 10^3] \setminus \Df$ reflects better the nature of the unlearned samples $\Df$ and thereby provides more reliable MIA results. However, it can be found that for both unlearning strategies, the verification method cannot reliably claim that the samples were unlearned, even though they satisfy the notion of exact unlearning, i.e., they are neither contained in the FE nor the SVM. The reason is that the samples in $\nontopk$ are often further from the decision boundary and easier to predict, hence resulting in large confidences independently of whether or not they are being used in the training. The accuracies are $0.89678$ for exact unlearning, and $0.88835$ for \algname.

\begin{figure}[t]
    \centering
    \resizebox{\linewidth}{!}{\begin{tikzpicture}

\definecolor{color0}{rgb}{0.12156862745098,0.466666666666667,0.705882352941177}
\definecolor{color1}{rgb}{1,0.498039215686275,0.0549019607843137}

\begin{axis}[
legend cell align={left},
legend style={fill opacity=0.8, draw opacity=1, text opacity=1, draw=white!80!black},
tick align=outside,
tick pos=left,
x grid style={white!69.0196078431373!black},
xlabel={Number of runs \(\displaystyle x\) a sample is claimed unlearned},
xmin=-1.43, xmax=21.45,
xtick style={color=black},
y grid style={white!69.0196078431373!black},
ylabel={Normalized number of samples},
ymajorgrids,
ymin=0, ymax=0.67809,
ytick style={color=black},
ytick={0,0.1,0.2,0.3,0.4,0.5,0.6,0.7},
yticklabels={0.0,0.1,0.2,0.3,0.4,0.5,0.6,0.7}
]
\draw[draw=none,fill=color0] (axis cs:-0.39,0) rectangle (axis cs:0.01,0.6458);
\addlegendimage{ybar,ybar legend,draw=none,fill=color0};
\addlegendentry{Perfect unlearning}

\draw[draw=none,fill=color0] (axis cs:0.61,0) rectangle (axis cs:1.01,0.09095);
\draw[draw=none,fill=color0] (axis cs:1.61,0) rectangle (axis cs:2.01,0.05135);
\draw[draw=none,fill=color0] (axis cs:2.61,0) rectangle (axis cs:3.01,0.03605);
\draw[draw=none,fill=color0] (axis cs:3.61,0) rectangle (axis cs:4.01,0.028);
\draw[draw=none,fill=color0] (axis cs:4.61,0) rectangle (axis cs:5.01,0.02445);
\draw[draw=none,fill=color0] (axis cs:5.61,0) rectangle (axis cs:6.01,0.01905);
\draw[draw=none,fill=color0] (axis cs:6.61,0) rectangle (axis cs:7.01,0.0174);
\draw[draw=none,fill=color0] (axis cs:7.61,0) rectangle (axis cs:8.01,0.01385);
\draw[draw=none,fill=color0] (axis cs:8.61,0) rectangle (axis cs:9.01,0.01185);
\draw[draw=none,fill=color0] (axis cs:9.61,0) rectangle (axis cs:10.01,0.00975);
\draw[draw=none,fill=color0] (axis cs:10.61,0) rectangle (axis cs:11.01,0.0093);
\draw[draw=none,fill=color0] (axis cs:11.61,0) rectangle (axis cs:12.01,0.00925);
\draw[draw=none,fill=color0] (axis cs:12.61,0) rectangle (axis cs:13.01,0.00855);
\draw[draw=none,fill=color0] (axis cs:13.61,0) rectangle (axis cs:14.01,0.00635);
\draw[draw=none,fill=color0] (axis cs:14.61,0) rectangle (axis cs:15.01,0.0063);
\draw[draw=none,fill=color0] (axis cs:15.61,0) rectangle (axis cs:16.01,0.00375);
\draw[draw=none,fill=color0] (axis cs:16.61,0) rectangle (axis cs:17.01,0.0031);
\draw[draw=none,fill=color0] (axis cs:17.61,0) rectangle (axis cs:18.01,0.00225);
\draw[draw=none,fill=color0] (axis cs:18.61,0) rectangle (axis cs:19.01,0.0018);
\draw[draw=none,fill=color0] (axis cs:19.61,0) rectangle (axis cs:20.01,0.00085);
\draw[draw=none,fill=color1] (axis cs:0.00999999999999998,0) rectangle (axis cs:0.41,0.5563);
\addlegendimage{ybar,ybar legend,draw=none,fill=color1};
\addlegendentry{\algname}

\draw[draw=none,fill=color1] (axis cs:1.01,0) rectangle (axis cs:1.41,0.11915);
\draw[draw=none,fill=color1] (axis cs:2.01,0) rectangle (axis cs:2.41,0.0638);
\draw[draw=none,fill=color1] (axis cs:3.01,0) rectangle (axis cs:3.41,0.0445);
\draw[draw=none,fill=color1] (axis cs:4.01,0) rectangle (axis cs:4.41,0.0359);
\draw[draw=none,fill=color1] (axis cs:5.01,0) rectangle (axis cs:5.41,0.02545);
\draw[draw=none,fill=color1] (axis cs:6.01,0) rectangle (axis cs:6.41,0.02325);
\draw[draw=none,fill=color1] (axis cs:7.01,0) rectangle (axis cs:7.41,0.01985);
\draw[draw=none,fill=color1] (axis cs:8.01,0) rectangle (axis cs:8.41,0.0165);
\draw[draw=none,fill=color1] (axis cs:9.01,0) rectangle (axis cs:9.41,0.01505);
\draw[draw=none,fill=color1] (axis cs:10.01,0) rectangle (axis cs:10.41,0.0147);
\draw[draw=none,fill=color1] (axis cs:11.01,0) rectangle (axis cs:11.41,0.0123);
\draw[draw=none,fill=color1] (axis cs:12.01,0) rectangle (axis cs:12.41,0.01005);
\draw[draw=none,fill=color1] (axis cs:13.01,0) rectangle (axis cs:13.41,0.0089);
\draw[draw=none,fill=color1] (axis cs:14.01,0) rectangle (axis cs:14.41,0.00785);
\draw[draw=none,fill=color1] (axis cs:15.01,0) rectangle (axis cs:15.41,0.0077);
\draw[draw=none,fill=color1] (axis cs:16.01,0) rectangle (axis cs:16.41,0.00615);
\draw[draw=none,fill=color1] (axis cs:17.01,0) rectangle (axis cs:17.41,0.00485);
\draw[draw=none,fill=color1] (axis cs:18.01,0) rectangle (axis cs:18.41,0.00375);
\draw[draw=none,fill=color1] (axis cs:19.01,0) rectangle (axis cs:19.41,0.0029);
\draw[draw=none,fill=color1] (axis cs:20.01,0) rectangle (axis cs:20.41,0.0011);
\end{axis}

\end{tikzpicture}}
    \caption{Unlearning random $20\cdot 10^3$ samples from $\nontopk[20k]$}
    \label{fig:unlearning_nontop_svs}
\end{figure}

\subsection{Verification of Unlearning Smaller Fractions of Samples} \label{app:unlearn_small_sample_sets}

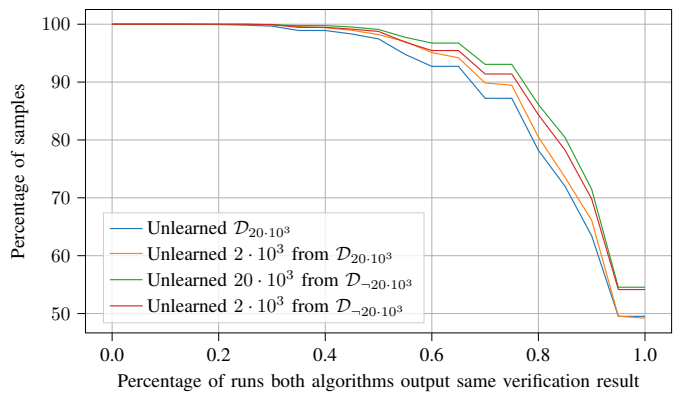
\begin{figure}[b]
    \centering
    \resizebox{\linewidth}{!}{\begin{tikzpicture}

\definecolor{color0}{rgb}{0.12156862745098,0.466666666666667,0.705882352941177}
\definecolor{color1}{rgb}{1,0.498039215686275,0.0549019607843137}
\definecolor{color2}{rgb}{0.172549019607843,0.627450980392157,0.172549019607843}
\definecolor{color3}{rgb}{0.83921568627451,0.152941176470588,0.156862745098039}

\begin{axis}[
legend cell align={left},
legend style={fill opacity=0.8, draw opacity=1, text opacity=1, at={(0.03,0.03)}, anchor=south west, draw=white!80!black},
tick align=outside,
tick pos=left,
x grid style={white!69.0196078431373!black},
xlabel={Percentage of runs both algorithms output same verification result},
xmajorgrids,
xmin=-0.05, xmax=1.05,
xtick style={color=black},
xtick={-0.2,0,0.2,0.4,0.6,0.8,1,1.2},
xticklabels={−0.2,0.0,0.2,0.4,0.6,0.8,1.0,1.2},
y grid style={white!69.0196078431373!black},
ylabel={Percentage of samples},
ymajorgrids,
ymin=46.66, ymax=102.54,
ytick style={color=black}
]
\addplot [semithick, color0]
table {%
0 100
0.05 100
0.1 100
0.15 99.985
0.2 99.955
0.25 99.845
0.3 99.675
0.35 98.915
0.4 98.915
0.45 98.305
0.5 97.465
0.55 94.78
0.6 92.715
0.65 92.715
0.7 87.195
0.75 87.195
0.8 78.215
0.85 72
0.9 63.425
0.95 49.52
1 49.52
};
\addlegendentry{Unlearned $\mathcal{D}_{20 \cdot 10^3}$}
\addplot [semithick, color1]
table {%
0 100
0.05 100
0.1 100
0.15 99.95
0.2 99.95
0.25 99.95
0.3 99.9
0.35 99.6
0.4 99.4
0.45 98.95
0.5 98.2
0.55 97
0.6 95.1
0.65 94.2
0.7 89.85
0.75 89.45
0.8 80.5
0.85 73.55
0.9 66.1
0.95 49.55
1 49.2
};
\addlegendentry{Unlearned $2 \cdot 10^3$ from $\mathcal{D}_{20 \cdot 10^3}$}
\addplot [semithick, color2]
table {%
0 100
0.05 100
0.1 100
0.15 100
0.2 99.995
0.25 99.985
0.3 99.91
0.35 99.755
0.4 99.755
0.45 99.515
0.5 99.09
0.55 97.74
0.6 96.745
0.65 96.745
0.7 93.065
0.75 93.065
0.8 86.12
0.85 80.425
0.9 71.475
0.95 54.55
1 54.55
};
\addlegendentry{Unlearned $20 \cdot 10^3$ from $\mathcal{D}_{\neg 20 \cdot 10^3}$}
\addplot [semithick, color3]
table {%
0 100
0.05 100
0.1 100
0.15 100
0.2 100
0.25 100
0.3 99.95
0.35 99.45
0.4 99.45
0.45 99.15
0.5 98.75
0.55 96.9
0.6 95.45
0.65 95.45
0.7 91.4
0.75 91.4
0.8 84.35
0.85 78.25
0.9 69.8
0.95 54.15
1 54.15
};
\addlegendentry{Unlearned $2 \cdot 10^3$ from $\mathcal{D}_{\neg 20 \cdot 10^3}$}
\end{axis}

\end{tikzpicture}}
    \caption{Comparison of verification results for samples $\Df$.} %
    \label{fig:comparison_all_unlearning_strategies}
\end{figure}

In \cref{sec:main_section}, we provided results for jointly unlearning $20 \cdot 10^3$ samples, i.e., either $\topk[20 \cdot 10^3]$ or random $20 \cdot 10^3$ samples from $\nontopk[20 \cdot 10^3]$. We additionally study a less extreme and more realistic scenario of unlearning $2 \cdot 10^3$ samples, being still a relatively large fraction $\frac{1}{30}$ of the total dataset. In practice, the unlearning request would likely comprise even fewer samples. We investigate two cases, where $2 \cdot 10^3$ samples are drawn randomly from $\topk[20 \cdot 10^3]$, and where $2 \cdot 10^3$ samples are drawn randomly from $\nontopk[20 \cdot 10^3]$. We perform the same analysis as for \cref{fig:comparison_unlearning_strategies} and depict the results in \cref{fig:comparison_all_unlearning_strategies}. We observe very similar behavior, almost independent of the number of samples to be unlearned and their underlying nature.

\subsection{Model Architecture} \label{app:architecture}

We use the LeNet5 architecture from \cite{lecun1998gradient}, with $61706$ model parameters. The layers of LeNet5 are provided in \cref{tab:lenet5}. The architecture of the FE $\fe$ contains everything but the last linear layer, here replaced by an SVM. For our experiments, we use the Fashion MNIST dataset comprising images of Zalando's articles with $28 \times 28$ pixels, each taking values from $0$ to $255$. The training dataset $\D$ consists of $60 \cdot 10^3$ samples, and the test dataset contains $10 \cdot 10^3$ samples.
\begin{table}[H]
\centering
\caption{LeNet5 Architecture Overview} \label{tab:lenet5}
\begin{tabular}{|l|l|l|} \hline
\textbf{Layer} & \textbf{Specification} & \textbf{Activation} \\ \hline
5x5 Conv & 6 filters, stride 1 & ReLU, AvgPool (2x2) \\
5x5 Conv & 16 filters, stride 1 & ReLU, AvgPool (2x2) \\
Linear & 120 units & ReLU \\
Linear & 84 units & ReLU \\
Linear & 10 units & Softmax \\ \hline
\end{tabular}
\end{table}

\end{document}